\crefname{equation}{eq.}{eqs.}
\Crefname{equation}{Eq.}{Eqs.}
\Crefname{section}{\S}{\S}
\theoremstyle{plain}
\newtheorem{theorem}{Theorem}[section]
\newtheorem{proposition}[theorem]{Proposition}
\newtheorem{lemma}[theorem]{Lemma}
\theoremstyle{definition}
\newtheorem{assumption}[theorem]{Assumption}
\theoremstyle{remark}
\newcommand{\mbX}{\ensuremath\mathbf{X}}
\newcommand{\mbZ}{\ensuremath\mathbf{Z}}
\newcommand{\mbz}{\ensuremath\mathbf{z}}
\newcommand{\mbx}{\ensuremath\mathbf{x}}
\DeclareRobustCommand{\parhead}[1]{\textbf{#1}~}
\title{Sparsity regularization via tree-structured environments for disentangled representations}
\author{%
  Elliot Layne\\
  McGill University, Mila\\
  \texttt{elliot.layne@mail.mcgill.ca} \\
  \And
Jason Hartford\\
  Valence Labs \\
  \texttt{jason@valencelabs.com} \\
  \And
  Sébastien Lachapelle \\
  Université de Montréal, Mila \\ 
  Samsung - SAIT AI Lab, Montreal \\
  \texttt{lachaseb@mila.quebec} \\
  \And
  Mathieu Blanchette \\
  McGill University, Mila \\ 
  \texttt{mathieu.blanchette@mcgill.ca} \\
  \And
  Dhanya Sridhar \\
  Université de Montréal, Mila \\
  \texttt{dhanya.sridhar@mila.quebec} \\
}
\begin{document}
\raggedbottom

\maketitle

\begin{abstract}
Many causal systems such as biological processes in cells can only be observed indirectly via measurements, such as gene expression. Causal representation learning---the task of correctly mapping low-level observations to latent causal variables---could advance scientific understanding by enabling inference of latent variables such as pathway activation. In this paper, we develop methods for inferring latent variables from multiple related datasets (environments) and tasks. As a running example, we consider the task of predicting a phenotype from gene expression, where we often collect data from multiple cell types or organisms that are related in known ways. The key insight is that the mapping from latent variables driven by gene expression to the phenotype of interest changes sparsely across closely related environments. To model sparse changes, we introduce Tree-Based Regularization (TBR), an objective that minimizes both prediction error and regularizes closely related environments to learn similar predictors. We prove that under assumptions about the degree of sparse changes, TBR identifies the true latent variables up to some simple transformations. We evaluate the theory empirically with both simulations and ground-truth gene expression data. We find that TBR recovers the latent causal variables better than related methods across these settings, even under settings that violate some assumptions of the theory. 
\end{abstract}

\section{Introduction}
\label{sec:intro}
Discovering new knowledge using machine learning is made ever more possible with the growing amounts of unstructured data that we collect, from images, videos and text to experimental measurements from large-scale biological assays. However, scientific discovery from such low-level measurements requires representation learning, the task of mapping low-level observations to a high-level feature space. 

As a running example, consider learning about drivers of disease from gene expression measurements, where genes drive protein abundance levels that mediate disease. 
If we want to discover meaningful features that could be causally linked to outcomes, we need methods to learn a \emph{disentangled} representation via an encoder that maps observations to causally relevant latent variables and not to transformations of these variables that change their meaning.

Disentanglement is impossible from independent and identically distributed (IID) data without assumptions to constrain the solution space \citep{hyvarinen1999nonlinear, locatello2019challenging}, but by leveraging known structure such as non-stationarity \citep{hyvarinen_nonlinear_2017, hyvarinen_nonlinear_2019} or sparsity \citep{brehmer_weakly_2022, ahuja2022weakly, lachapelle2022disentanglement}, it is possible to identify latent variables. The challenge is in finding assumptions that are strong enough to rule out spurious solutions, while remaining flexible enough to fit the domain of interest. 
To this end, this paper proposes Tree-Based Regularization (TBR), a new disentangled representation learning method that is particularly well-suited to biological settings.
TBR leverages non-stationarity from multiple datasets -- called environments -- that are hierarchically related via a known tree.

In our running biology example, the environments are defined by cell types and their relations come from the tree describing the cell-type differentiation process \cite{cell_hierarchy}. We assume that the relationship between low-level observations and latent variables, represented by $P(\mbZ|\mbX)$, is constant across these environments as it is driven by the underlying physics of the cell. 
However, the mechanism driving the outcome (e.g., disease), $P_e(Y|\mbZ)$, changes across environments, reflecting processes like variation across cell types or evolution.

The key assumption that we make is that across closely related environments, the effects of only a sparse set of true latents will vary.
That is, changes to the conditional distributions $P_e(Y|\mbZ)$ are sparse,
such that environments that are siblings in the known tree share the same functional form up to a few parameters, while more distant environments will have accumulated more changes. 
TBR enforces this structure via a sparsity-inducing penalty. These constraints reflect biological settings, where it is common to observe data across multiple cell types or model organisms with known relationships, and allow us to learn representations that identify the true latent variables up to irrelevant transformations. 

We show both theoretically and empirically that by enforcing this sparsity assumption on changes of the conditional distributions, it is possible to identify the ground truth latent variables up to a permutation and scaling factor. We further analyze the sensitivity of TBR to assumption violations and find that it remains robust up to a point. Then, we apply TBR to a quasi-real dataset of gene expression measurements across different cell types, where we hold out some genes as latent variables and simulate outcomes. We find that TBR recovers the true latent variables with higher fidelity than standard representation learning approaches, which then leads to higher transfer learning performance, demonstrating a concrete benefit of the TBR approach.

\section{Problem formulation}
\label{sec:formulation}
We consider an input vector $\mbX$ (e.g., gene expression) and a target variable of interest $Y$ (e.g., disease phenotype). 
We observe $n$ iid samples $(\mbx_i^e, y_i^e)_{i=1}^n$ from $E$ different environments where each $e$ denotes the environment that produced the sample. \Cref{fig:dag} illustrates the causal model of the data generating process. The inputs $\mbX$ drive latent features $\mbZ$ that mediate the effect on $Y$. $P(Y \vert \mbZ)$ varies across environments while $P(\mbZ \vert \mbX)$ remains invariant.
\begin{figure}
        \centering
        \includegraphics[width=0.3\linewidth]{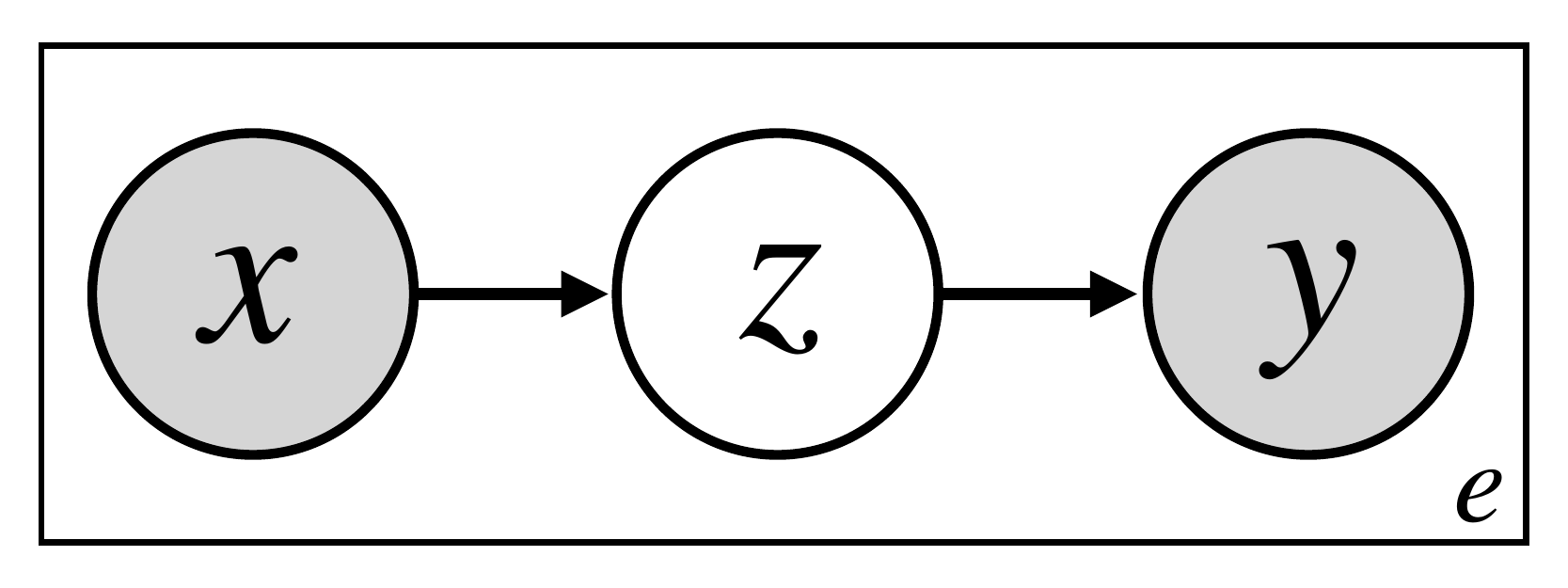}
    \caption{The causal directed acyclic graph (DAG) relating observations $\textbf{X}$, latents $\textbf{Z}$ and target variable of interest $Y$. The marginal of $X$ and $\textbf{Z} \rightarrow Y$ relationship is specific to environment $e$.}
    \label{fig:dag}
\end{figure}
Specifically, we consider the following generative process: 
\begin{align}
\label{eq:gen_process}
\begin{split}
    \textbf{Z} &= \Psi(\textbf{X}) + \eta; \quad Y^e = w_{e}^{\top}\textbf{Z}^e + \epsilon
\end{split}
\end{align}
There exists an environment-invariant function relating $\textbf{X}$ to $\textbf{Z}$, with noise term $\eta$, and an environment-specific function $\Phi_e$ that relates latent factors $\textbf{Z}^e$ to $Y^e$. In particular, we consider the case where $\Phi_e$ is a linear mapping with weights $w_e \in \mathbb{R}^k$ where $k = |\textbf{Z}|$, and $\epsilon$ is an environment-independent noise term. Linearity of the output map is clearly restrictive, but given that $\Phi(\cdot)$ can be a basis function expansion of $\mathbf{Z}$, this process could still capture a variety of nonlinear maps between latents and targets.

Critically, we posit that the environments have diverged according to some evolution-like process. We denote directed tree $\mathcal{T} = (\mathcal{E}, \mathcal{A})$, where environments $\mathcal{E}$ are the nodes of the tree, and $\mathcal{A} \subset \mathcal{E}^2$ denotes the set of arcs\footnote{An \textit{arc} is a directed edge between two nodes.} between environments. The environment $0 \in \mathcal{E}$ is assumed to be the root environment. 
In practice, we will often observe samples only from leaf environments, and not from internal nodes in the tree (which represent historical environments, such as ancestral species or transient progenitor cells), but our framework is agnostic to this. 

 We assume that the environment-specific parameters $w_e$ are subject to mutations between parent-child environment pairs. In addition, we assume that these mutations are \textbf{sparse}, such that across each arc $a = (e_m, e_n)$, there exists a subset of indices $\mathcal{S} \subset \{1,...k\}$ such that $\forall i \in \mathcal{S}, w_{e_m}[i] \neq w_{e_n}[i]$, and the components of  $w_{e_m}$ and $ w_{e_n}$ are equal otherwise. We denote the vector of differences in parameters across arc $a$ as $\delta_{a} = w_{e_n} - w_{e_m}$. Note that the weights for any environment $w_e$ can be expressed as the sum of the root weights $w_{0}$ and the relevant mutation vectors: 
\begin{equation}
w_e := w_0 + \sum_{a \in \text{path}(0, e)} \delta_a \ \label{eq:w_e_def}
\end{equation}
where $\text{path}(0,e)$ denotes the set of arcs forming the path from the root environment $0$ to environment $e$. 
We denote $\Delta  \in \mathbb{R}^{|\mathcal{A}| \times k}$ as the matrix whose rows are the set of vectors $\delta_{a}$ for each arc $a \in \mathcal{A}$. 

The proposed evolution-like process relating environments via sparse local changes causes the distributions $P^{e}(Y \mid \textbf{Z})$ to gradually change. This model of evolution aligns with many observed processes in nature where sparse genetic or epigenetic changes alter the process leading to a given phenotype, such as cell type differentiation and species evolution.

\subsection{Tree-based regularization}

We approximate $\Psi$  with an arbitrary deep neural network $\hat{\Psi}_{\theta}$ with parameters $\theta$:
\begin{equation}
    \hat{\textbf{Z}} = \hat{\Psi}_{\theta}(\textbf{X})
\end{equation}
We estimate a set of weights $\hat{w}_{0} \in \mathbb{R}^{k}$, along with a matrix $\hat{\Delta} \in \mathbb{R}^{|\mathcal{A}| \times k}$, whose rows contains the estimates $\hat{\delta}_{a}$ for each $a \in \mathcal{A}$, permitting estimation of each $\hat{w}_e $ in the form of \Cref{eq:w_e_def}. 

We jointly optimize $\hat{\Psi}_{\theta}, \hat{w}_{0}$ and $\hat{\Delta}$ to produce optimal predictions $\hat{Y}_e$ across all environments. Additionally, we regularize our estimate $\hat{\Delta}$ with a sparsity-inducing norm $||\hat{\Delta}||_0$.

Assuming that we only observe data samples in a subset of environments $\mathcal{L} \subset \mathcal{E}$, typically the leaves in $\mathcal{T}$, this yields the following loss function:
\begin{equation}
    \text{Loss} = \sum_{e \in \mathcal{L}}(w_{e}^\top\hat{\Psi}_{\theta}(\textbf{X}^{e}) - Y^{e})^{2} +  \lambda || \hat{\Delta} ||_{0}
        \label{eq:dendro_loss}
\end{equation}
where $\lambda > 0$ is a scaling parameter. 
The first term in \Cref{eq:dendro_loss} is a standard prediction error term, minimizing the mean-squared error (MSE) for predictions of $\hat{Y}$. The second term regularizes the $L_0$ norm of the matrix $\Delta$. Since parameters in $\Delta$ are the differences across each arc in $\mathcal{A}$, by maximizing the sparsity of $\Delta$, we encourage parent-child environment pairs to learn similar predictors.
This captures the essence of our motivation, that the mechanisms controlling $P(Y \mid \mbZ)$ should only change rarely. We will denote our method, optimizing \Cref{eq:dendro_loss}, as \emph{tree-based regularization} (TBR).
Figure \ref{fig:TBR} illustrates an example data-generating process involving a tree with 5 environments and 3 leaves.

\begin{figure}
    \centering
    \includegraphics[width=0.5\linewidth]
    {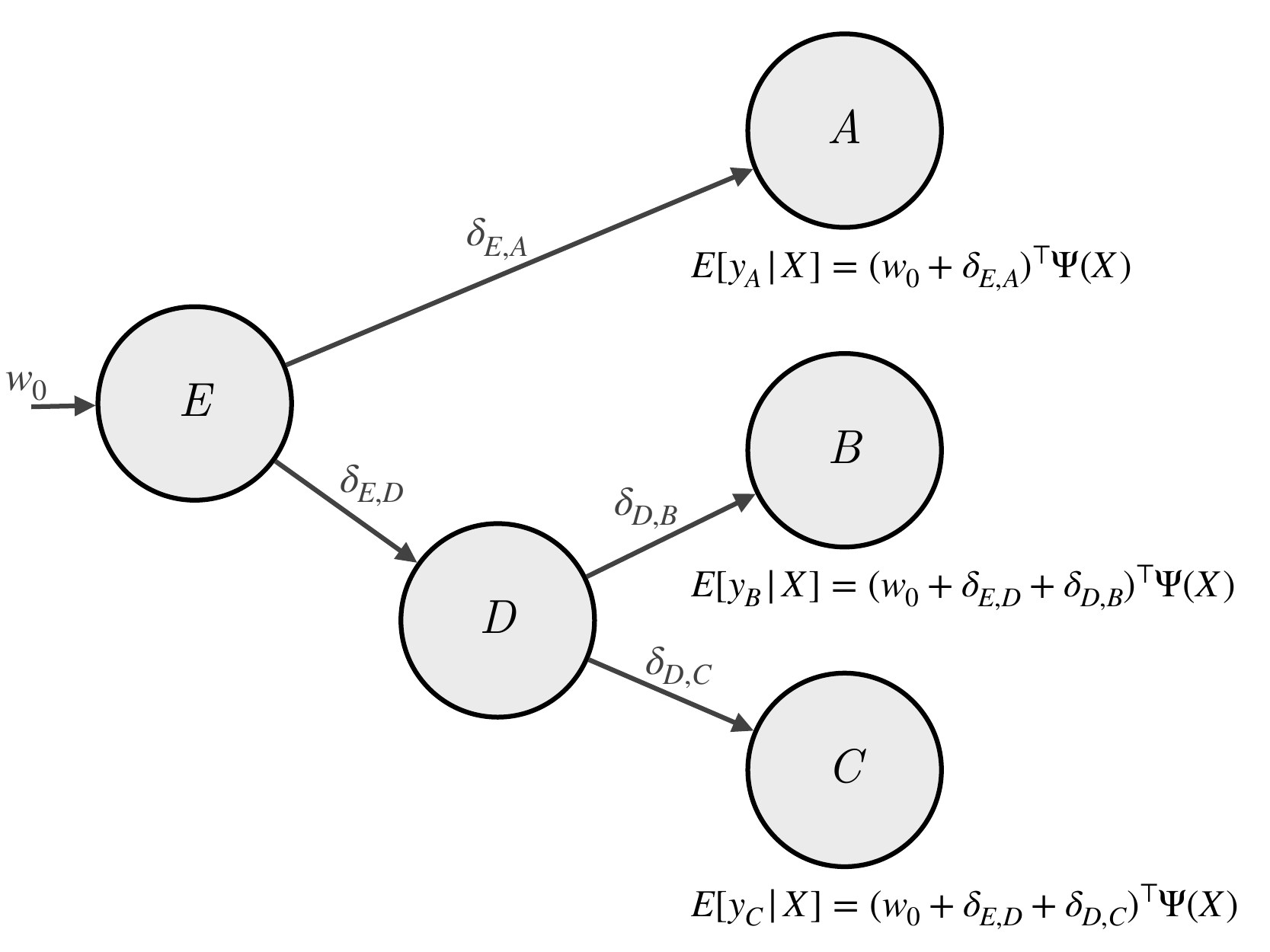}
    \caption{An example of posited DGP and corresponding TBR parameterization for a dataset with samples originating from three different environments ($A, B$, and $C$). A set of weights $W_0$ is associated to the root. A trainable update $W_e$ is associated to each edge $e$. The prediction function $f_u$ at node $u$ will make predictions on data samples originating from environment $u$. 
    Adapted from \cite{dendronet}.}
    \label{fig:TBR}
\end{figure}

\section{Identifiability via sparse tree-based regularization} \label{sec:identification}

This section presents our main theoretical contribution on disentanglement. More formally, we will show that the learned $\hat{\mbZ}$ 
identifies the true latent variables $\mbZ$ up to permutation and element-wise rescaling, i.e., $\hat{\mbZ} = \textbf{DPZ}$ where $\textbf{D}$ is an invertible diagonal matrix and $\textbf{P}$ is a permutation. 

The problem of non-identifiability arises because the first term in the loss function in \Cref{eq:dendro_loss}, which penalizes prediction errors, admits many solutions for $\hat{\mbZ}$.
Consider the generative process specific to environment A in Figure \ref{fig:TBR}:
\begin{equation}
    Y = (w_0 + \delta_{A})^{\top}\textbf{Z} + \epsilon \label{eq:dgp_dis}
\end{equation}
We can see that the distribution of $Y$ remains unchanged if $\textbf{Z}$ is subject to some linear transformation $\textbf{L}$, so long as an inverse transformation is applied to $w_0$ and $\Delta$:
\begin{equation}
    Y = (w_{0}^\top\textbf{L}^{-1} + \delta_{A}^\top \textbf{L}^{-1})\textbf{L}\textbf{Z} + \epsilon \label{eq:dgp_ent}
\end{equation}
However, the solution $\hat{\mbZ} = \mathbf{L}\mbZ$ potentially linearly entangles the components of the true features $\mbZ$. 
For tasks that require the learned features to be semantically faithful to the true features, e.g., inferring causal effects of the features, this entanglement will lead to biased conclusions.

The second term in the loss in \Cref{eq:dendro_loss} plays a key role in selecting disentangled solutions among all solutions that achieve optimal prediction error.
This is because the regularization term $\lambda||\hat{\Delta}||_{0}$ is generally not invariant to linearly entangled solutions such as $\mathbf{L}\mbZ$, as because for most choices of $\mathbf{L}$, $||\Delta||_{0} \neq ||\Delta\textbf{L}^{-1}||_{0}$, unless $\textbf{L} = \textbf{PD}$ (when the representation is disentangled).

The core of the theory in this section lies in establishing assumptions under which the only linear transformations that can be applied to $\Delta$ while maintaining the same level of sparsity, and thus the regularization penalty in \eqref{eq:dendro_loss}, are permutation and scaling operations. Combined with a result demonstrating the identifiability of $\textbf{Z}$ up to linear transformation, this regularization constraint permits disentanglement of both $\Delta$ and the latent features $\textbf{Z}$.

Our intuition focused on linearly entangled solutions $\hat{\mbZ} = \mathbf{L}\mbZ$ but in general, the hidden layer of a neural network could produce arbitrarily entangled solutions. As such, we prove the identifiability of features learned using tree-based regularization in two parts.
First, we establish conditions under which the features learned with a nonlinear function, e.g., a neural network, are identified up to linear transformations, bringing us back to the linearly entangled case. This result builds on proof techniques found in the literature~\cite{ivae,ice-beem20,roeder2020linear,ahuja2022towards,lachapelle2022synergies}.

Second, we show that with some assumptions, the sparsity constraint on $||\hat{\Delta}||_{0}$ permits further identification of $\mbZ$ up to permutation and scaling. Specifically, we show that any $\Delta$ with at most one non-zero value per row cannot be entangled without a resulting increase in regularization cost.

\subsection{Linear Identification of $\textbf{Z}$} \label{sec:linear_identity}

\begin{assumption}[Data-generating process (DGP)] \label{ass:dgp}
For each environment $e$, we assume the pairs $(x, y)$ are drawn i.i.d from a distribution satisfying
\begin{align}
    \mathbb{E}[Y \mid \textbf{X}, e] &= w_{e}^\top\Psi(\textbf{X})\,.
\end{align}
Additionally, we assume the support of $p(x \mid e)$, denoted by $\mathcal{X} \in \mathbb{R}^{d}$, is fixed across environments.
\end{assumption}

\begin{assumption}[Sufficient task variability]\label{ass:suff_var_w}
    We assume that there exist environments $e_1, \dots, e_{k} \in \mathcal{E}$ such that the matrix $[w_{e_{1}}, ... w_{e_{k}}]$ is invertible.
\end{assumption}

\begin{assumption}[Sufficient representation variability]\label{ass:suff_var_z}
We assume there exist $x^{1},...,x^{k} \in \mathcal{X}$ such that the matrix $[\Psi(x^{1}),...,\Psi(x^{k})]$ is invertible.
\end{assumption}

Taken together, these assumptions establish our first requirement, by implying that the latents are identified up to a linear transformation $\mathbf{L}$,

\begin{proposition}
Suppose Assumptions~\ref{ass:dgp}, \ref{ass:suff_var_w}~\&~\ref{ass:suff_var_z} hold. Moreover, consider the learned parameters $\hat w_0$ and $\{\hat\delta_a\}_{a\in\mathcal{A}}$ and the learned encoder function $\hat \Psi(x)$. Analogously to Equation \eqref{eq:w_e_def}, we define $\hat w_e := \hat w_0 + \sum_{a \in \text{path}(0, e)} \hat\delta_a$ for all $e \in \mathcal{E}$. If for all $\textbf{X} \in \mathcal{X}$ and all $e \in \mathcal{E}$ we have $\mathbb{E}[Y|\textbf{X},e] = \hat w_e^\top \hat \Psi(\textbf{X})$, then, there exists an invertible matrix $\textbf{L} \in \mathbb{R}^{k \times k}$ such that
\begin{enumerate}
    \item For all $x \in \mathcal{X}$, $\Psi(x) = \textbf{L}\hat\Psi(x)$;
    \item For all $e \in \mathcal{E}$, $w_e^\top\textbf{L} = \hat w_e^\top$; and
    \item For all $a \in \mathcal{A}$, $\delta_a^\top \textbf{L} = \hat\delta_a^\top$.
\end{enumerate}
\end{proposition}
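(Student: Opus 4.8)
The plan is the standard two-step linear-identifiability argument: first use \Cref{ass:suff_var_w} to collapse the whole family of conditional-expectation equalities into a single linear relation between $\Psi$ and $\hat\Psi$, and then use \Cref{ass:suff_var_z} to certify that this relation is implemented by an invertible matrix, which I take to be $\textbf{L}$.

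\emph{Step one.} By \Cref{ass:dgp} applied to both the true and the learned model, for every $e\in\mathcal E$ and every $x\in\mathcal X$ we have $w_e^\top\Psi(x)=\mathbb E[Y\mid \textbf X=x,e]=\hat w_e^\top\hat\Psi(x)$. I would pick the $k$ environments $e_1,\dots,e_k$ furnished by \Cref{ass:suff_var_w}, stack the corresponding $k$ scalar identities into $W^\top\Psi(x)=\hat W^\top\hat\Psi(x)$ with $W:=[w_{e_1},\dots,w_{e_k}]$ and $\hat W:=[\hat w_{e_1},\dots,\hat w_{e_k}]$, and --- using that $W$ is invertible --- rearrange to $\Psi(x)=\textbf L\hat\Psi(x)$ for all $x\in\mathcal X$, where $\textbf L:=(W^\top)^{-1}\hat W^\top$. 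This is item 1, pending invertibility of $\textbf L$.

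\emph{Step two.} Evaluating item 1 at the points $x^1,\dots,x^k$ of \Cref{ass:suff_var_z} gives $[\Psi(x^1),\dots,\Psi(x^k)]=\textbf L\,[\hat\Psi(x^1),\dots,\hat\Psi(x^k)]$; since the left-hand matrix is invertible, $\textbf L$ has rank $k$ and is therefore invertible (and, as a by-product, $[\hat\Psi(x^1),\dots,\hat\Psi(x^k)]$ is invertible, so the vectors $\hat\Psi(x^i)$ span $\mathbb R^k$). Plugging $\Psi(x)=\textbf L\hat\Psi(x)$ back into $w_e^\top\Psi(x)=\hat w_e^\top\hat\Psi(x)$ yields $(w_e^\top\textbf L-\hat w_e^\top)\hat\Psi(x)=0$ on all of $\mathcal X$, and evaluating at $x^1,\dots,x^k$ forces $w_e^\top\textbf L=\hat w_e^\top$, which is item 2. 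Item 3 then drops out by telescoping along tree paths: for an arc $a=(e_m,e_n)$ the path to $e_n$ is the path to $e_m$ extended by $a$, so $\delta_a=w_{e_n}-w_{e_m}$ and, by definition of $\hat w_e$, $\hat\delta_a=\hat w_{e_n}-\hat w_{e_m}$; applying item 2 to $e_m$ and $e_n$ gives $\delta_a^\top\textbf L=\hat\delta_a^\top$.

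\emph{Main obstacle.} The only genuine subtlety --- and it is mild --- is bookkeeping about which assumption delivers which conclusion: \Cref{ass:suff_var_w} on its own produces only \emph{some} linear map $\textbf L$ with $\Psi=\textbf L\hat\Psi$ on $\mathcal X$, and that map could be singular if $\hat\Psi$ degenerates in some direction, in which case items 2 and 3 would not even be well-posed as stated; \Cref{ass:suff_var_z} is precisely what rules this out. I would also flag that the whole argument uses only equality of conditional expectations on the shared support $\mathcal X$, which is exactly what \Cref{ass:dgp} guarantees, and that no invertibility of $\hat W$ needs to be assumed --- it is recovered for free from the rank computation in Step two.
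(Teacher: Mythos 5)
Your proof is correct and follows essentially the same route as the paper's: stack the $k$ environments from Assumption~\ref{ass:suff_var_w} to extract $\textbf{L}$, use Assumption~\ref{ass:suff_var_z} to certify invertibility of $\textbf{L}$ (and of the stacked $\hat\Psi$ evaluations, which then yields item~2 by cancellation), and telescope along arcs for item~3. Your added remark that invertibility of $\hat W$ need not be assumed but falls out of the rank argument is a nice clarification, but the substance is identical to the paper's proof.
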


The proof is included in Appendix A.1.

\subsection*{Identification Up To Scaling and Permutation}
\label{sec:full_identity}

The previous section shows that we can identify every $\delta_a$ up to some common invertible linear transformation $\textbf{L}$. Recall $\Delta, \hat \Delta \in \mathbb{R}^{|\mathcal{A}| \times k}$ are simply the concatenations of the $\delta_a$ and $\hat \delta_a$, respectively. This means we can write $\Delta \textbf{L} = \hat \Delta$. We now show that penalizing the $L_0$-norm of our estimate $\hat{\Delta}$ allows us to identify $\Delta$ up to a permutation and scaling.    

\begin{assumption}[1-sparse perturbations]\label{ass:1_sparse}
    Each $\delta_a$ has at most one nonzero value.
\end{assumption}

\begin{assumption}[Sufficient perturbations]\label{ass:suff_perturb}
     For all $i\in \{1, \dots, k\}$, there exists $a \in \mathcal{A}$ such that $\delta_{a,i} \not= 0$.
\end{assumption}
 
\begin{proposition}[Disentanglement via 1-sparse perturbations] Suppose Assumptions~\ref{ass:1_sparse}~\&~\ref{ass:suff_perturb} hold, let $\textbf{L}$ be an invertible matrix and let $\hat\Delta := \Delta \textbf{L}$. If $||\hat\Delta||_{0} \leq ||\Delta||_{0}$, then $\textbf{L}$ is a permutation-scaling matrix.
\end{proposition}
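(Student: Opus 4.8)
The plan is to exploit the rigid structure that Assumption~\ref{ass:1_sparse} forces on the rows of $\Delta$ and to track what left-multiplication by $\mathbf{L}$ does to the $L_0$ count. Since each row $\delta_a$ has at most one nonzero entry, I would first write $\|\Delta\|_0 = \sum_{i=1}^k n_i$, where $n_i := \#\{a \in \mathcal{A} : \delta_{a,i} \neq 0\}$ is the number of arcs whose unique perturbed coordinate is $i$; Assumption~\ref{ass:suff_perturb} guarantees $n_i \ge 1$ for every $i$. Writing $\ell_i$ for the number of nonzero entries in the $i$-th row of $\mathbf{L}$, the key observation is that for an arc $a$ perturbed in coordinate $i$ we have $\hat\delta_a = \delta_a \mathbf{L} = \delta_{a,i}\cdot(\text{$i$-th row of }\mathbf{L})$, a nonzero scalar multiple of that row, so $\|\hat\delta_a\|_0 = \ell_i$, while arcs with $\delta_a = 0$ stay zero. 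Summing over arcs yields $\|\hat\Delta\|_0 = \sum_{i=1}^k n_i\, \ell_i$.

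Next I would plug this into the hypothesis $\|\hat\Delta\|_0 \le \|\Delta\|_0$, which rearranges to $\sum_{i=1}^k n_i(\ell_i - 1) \le 0$. Because $\mathbf{L}$ is invertible it has no zero row, so $\ell_i \ge 1$ and hence $\ell_i - 1 \ge 0$ for every $i$; since the coefficients $n_i$ are strictly positive by Assumption~\ref{ass:suff_perturb}, every summand must vanish, i.e.\ $\ell_i = 1$ for all $i$. In words, every row of $\mathbf{L}$ has exactly one nonzero entry.

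Finally I would upgrade "every row of $\mathbf{L}$ is $1$-sparse" to "$\mathbf{L}$ is a permutation-scaling matrix" by invoking invertibility once more: if two distinct rows of $\mathbf{L}$ had their nonzero entries in the same column, both rows would be scalar multiples of the same standard basis vector and therefore linearly dependent, contradicting invertibility. Hence the map sending each row index to the column index of its (unique) nonzero entry is injective, so it is a bijection on $\{1,\dots,k\}$; this is exactly the statement that $\mathbf{L} = \mathbf{P}\mathbf{D}$ for a permutation $\mathbf{P}$ and an invertible diagonal $\mathbf{D}$, which completes the proof.

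I expect the only genuinely delicate point to be this last step: the passage from a purely combinatorial conclusion ($1$-sparse rows) to the algebraic one ($\mathbf{L}$ is monomial) really requires invertibility, since without it a matrix with $1$-sparse rows can collapse several rows onto the same coordinate and fail to be a generalized permutation. Everything preceding it is bookkeeping, provided one is careful that $\|\cdot\|_0$ applied to a matrix counts nonzero entries and that multiplying a $1$-sparse row vector on the right by $\mathbf{L}$ merely selects and rescales a single row of $\mathbf{L}$.
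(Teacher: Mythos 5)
Your proof is correct, and it takes a genuinely different---and more direct---route than the paper's. You work row by row: since each $\delta_a$ has at most one nonzero entry, say in coordinate $i$, the product $\delta_a^\top \mathbf{L}$ is a nonzero scalar multiple of the $i$-th row of $\mathbf{L}$, which yields the exact identity $\|\hat\Delta\|_0 = \sum_i n_i \ell_i$ and reduces the whole problem to $\sum_i n_i(\ell_i - 1) \le 0$ with nonnegative summands; the final upgrade from ``every row of $\mathbf{L}$ is $1$-sparse'' to ``$\mathbf{L}$ is monomial'' via injectivity of the row-to-column map is exactly right. The paper instead works column by column: it first invokes a determinant argument (its lemma that the sparsity pattern of an invertible matrix contains a permutation) to extract $\pi$ with $\mathbf{L}_{\pi(i),i}\neq 0$, defines the sets $\Omega_i$ and $\Gamma_i$ of indices where the sparsity of $\hat\Delta_{:,i}$ decreases or increases relative to $\Delta_{:,\pi(i)}$, shows $\Omega_i=\emptyset$ using $1$-sparsity, forces $\Gamma_i=\emptyset$ from the global sparsity budget, and only then uses sufficient perturbations to kill the off-diagonal entries $\mathbf{L}_{\pi(i),-i}$. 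Your argument is shorter and avoids the determinant lemma entirely, but it leans harder on exact $1$-sparsity: the identity $\|\hat\delta_a\|_0 = \ell_i$ breaks down once a row of $\Delta$ has two or more nonzero entries, because the resulting linear combination of rows of $\mathbf{L}$ can cancel. The paper's column-wise bookkeeping with $\Omega_i$ is precisely the device it points to (in its appendix remarks) for discussing relaxations of the $1$-sparse assumption, since there the failure mode is localized to the indices where such cancellations occur. For the proposition as stated, both arguments are sound.
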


The proof is included in Appendix A.2.

\paragraph{Proof sketch}

To understand the crux of proving this result, recall the aspect of TBR that drives the result in the first place: the $L_0$ norm prefers solutions $\hat{\Delta}$ that are sparse. For this regularization to choose the disentangled solution, $\hat{\mbZ} = \mbZ\mathbf{D}\mathbf{P}$, we have to show that $\hat{\Delta} = \mbZ^\top \mathbf{L}^{-1}$, the result of linearly entangled solution, cannot be sparser than $\hat{\Delta}$ for the disentangled solution.
To this end, our strategy will be to fix the amount of sparsity in the ground-truth matrix $\Delta$. Then, we will show by contradiction that there exists a mapping between the columns of $\Delta$ and $\hat{\Delta}$ such that each column in $\Delta$ is at least as sparse as the corresponding column in $\hat{\Delta}$. Using this result, we then show that to meet our constraint on the sparsity of $\hat{\Delta}$, it must be the case that $\mathbf{L}$ is a permutation-scaling matrix.

While Assumption~\ref{ass:1_sparse}, which requires all rows in $\Delta$ to have at most 1 non-zero entry, is unlikely to hold in many practical applications of interest, we do note that we empirically consider violations of this assumption in ~\cref{sec:experimentation}, and find the results promising. Some further discussion about relaxation of the $1$-sparse assumption is included in Appendix A.2. 

\section{Empirical studies}\label{sec:experimentation}

The goal of the empirical studies is to demonstrate that TBR recovers disentangled latents, which further lead to accurate causal inference and transfer learning. To that end, we conduct studies with simulated data and gene expression data across cell types to investigate the behaviour of TBR under various generative settings, robustness to violations of Assumption~\ref{ass:1_sparse}~, and potential downstream benefits to disentangled representations. For all these experiments, we compare the representations produced by TBR to those from a baseline method: a standard linear map fit to the target label $Y$ on top of the learned nonlinear representation $\hat{\Psi}$. Additional implementation details are included in Appendix A.5. All results are averaged over 10 repetitions unless otherwise specified.

\paragraph*{Performance metric}
In Figure \ref{fig:mcc}, we evaluate the disentanglement of $\hat{\mbZ}$ using the mean correlation coefficient (MCC), defined in Appendix A.5,. An MCC score of $1.0$ indicates an estimated representation is equal to the ground truth up to permutation and scaling.

\paragraph{Simulation details}
\label{par:simulation}

The general simulation procedure is: we generate a balanced binary tree $\mathcal{T}$, of depth $7$ with a total of 128 leaves. In each non-root environment, we sample 3000 datapoints according to,
\begin{align}
\begin{split}
    \mbx \in \mathbb{R}^{16} &\sim \mathcal{N}(0, 1) \\
    \mbz \in \mathbb{R}^{5} &\sim \mathcal{N}(\Phi(\mbx) , 0.1)\\
    \Phi(\mbx) &= \text{tanh}(\mathbf{w}^\top \mbx); \quad \mathbf{w} \sim \mathcal{N}(0, 1) \\
    w_0 &\sim \mathcal{N}(0, 1) \\
\end{split}
\end{align}
Each arc $a$ in $\mathcal{T}$ is assigned a sparse $\delta_a \in \mathbb{R}^{5}$, with non-zero Gaussian values ($\sigma^{2}=0.25$) at a random subset of $S$ indices, where $S$ denotes the number of non-zero entries and is varied throughout experimentation. $w_e$ is generated using Equation \eqref{eq:w_e_def}, and the target $Y$ is sampled according Equation \eqref{eq:gen_process}, with $\epsilon \sim \mathcal{N}(0, 0.1)$. In Appendix A.6, we show results for an equivalent simulation where only leaf nodes are observed, more closely matching biological applications.

\parhead{Disentanglement.} To study the first question about validating our disentanglement theory, we evaluated MCC of the representations learned by TBR and the baseline method while varying the level of sparsity in $\Delta$, from $S = 0$, indicating each $\delta \in \Delta$ is a null vector, to $S = 5$, resulting in a fully dense $\Delta$. The results, displayed in Figure \ref{fig:mcc}, align with our theoretical findings. With $S = 0$, $\Delta$ is completely sparse, and $\hat{\Psi}$ is free to entangle $\hat{\mbZ}$ without any increase in regularization cost. In this setting, both TBR and the baseline method learn entangled representations, exhibiting mean MCC of $0.52 \pm 0.05$ and $0.44 \pm 0.05$ respectively.

In the $S = 1$ setting, which uniquely satisfies Assumption \ref{ass:1_sparse}, TBR elicits mean MCC scores of $0.98 \pm 0.03$ indicating that the representations of $\hat{\mbZ}$ are almost fully disentangled. 

\parhead{Sensitivity to assumption violations.} The settings of $S  \geq 2$ test the robustness of our method to violations of Assumption \ref{ass:1_sparse}, which require 1-sparse changes across environments. As $S$ increases, we see that the MCC scores obtained by TBR gradually decline, a reasonable result given that the guarantees of \Cref{sec:identification} no longer hold. However, TBR remains notably stronger than baseline, indicating that tree-based regularization incentivizes partial disentanglement even in settings not covered by our theory. Some additional results exploring further variations on our simulation setting are included in Appendix A, showing robustness to all-linear DGPs, and more stochastic amounts of variation in $\Delta$.

\parhead{Predictive performance.} 
We evaluate the MSE of $\hat{Y}$ for each method while varying the sparsity level $S$ for each $\delta \in \Delta$. Figure \ref{fig:mse} highlights these results. When $S=0$, $\mathbb{E}[Y|Z]$ is invariant across environments, matching the model fit by the standard baseline as well as TBR. As expected, both methods perform well in this case. As $S$ increases, TBR maintains low prediction error, while the performance of the baseline method quickly degrades. This aligns with expectations, as the baseline method lacks the capacity to fit the environment-specific effects of the latents $\mbZ$.

We further find evidence that varying the dimension of $|\hat{\mbZ}|$ in order to minimize predictive error can enable estimation of the true dimension of $|\mbZ|$ if it is unknown, satisfying a key assumption of our method. Supporting results and discussion are included in Appendix A.6.

\begin{figure*}[t!]
\centering
    \begin{subfigure}[t]{0.49\linewidth}
  \centering
  \includegraphics[width=0.99\linewidth]{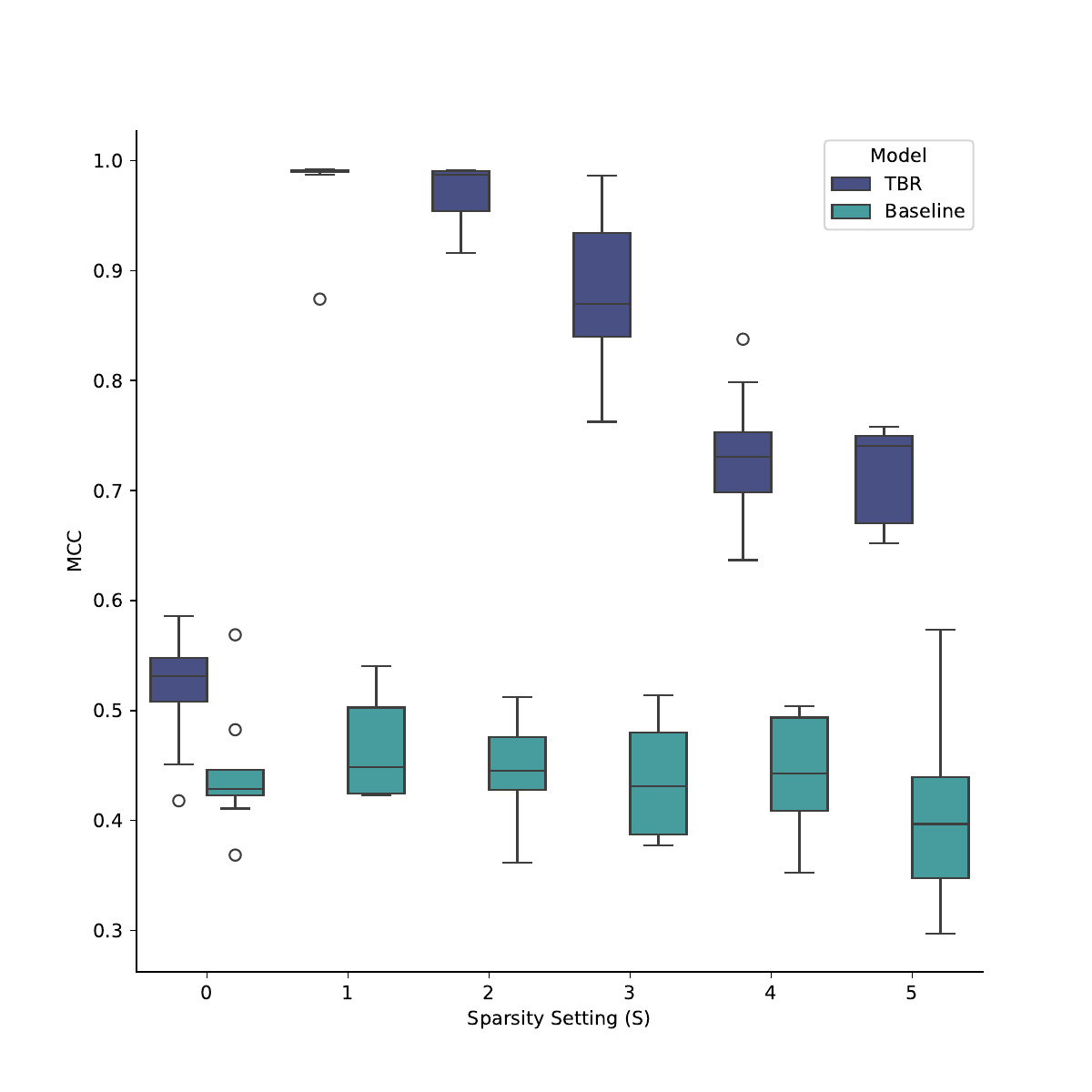}
  \caption{Disentanglement performance (MCC).}
  \label{fig:mcc}
\end{subfigure} 
\begin{subfigure}[t]{0.49\linewidth}
  \centering
  \includegraphics[width=0.99\linewidth]{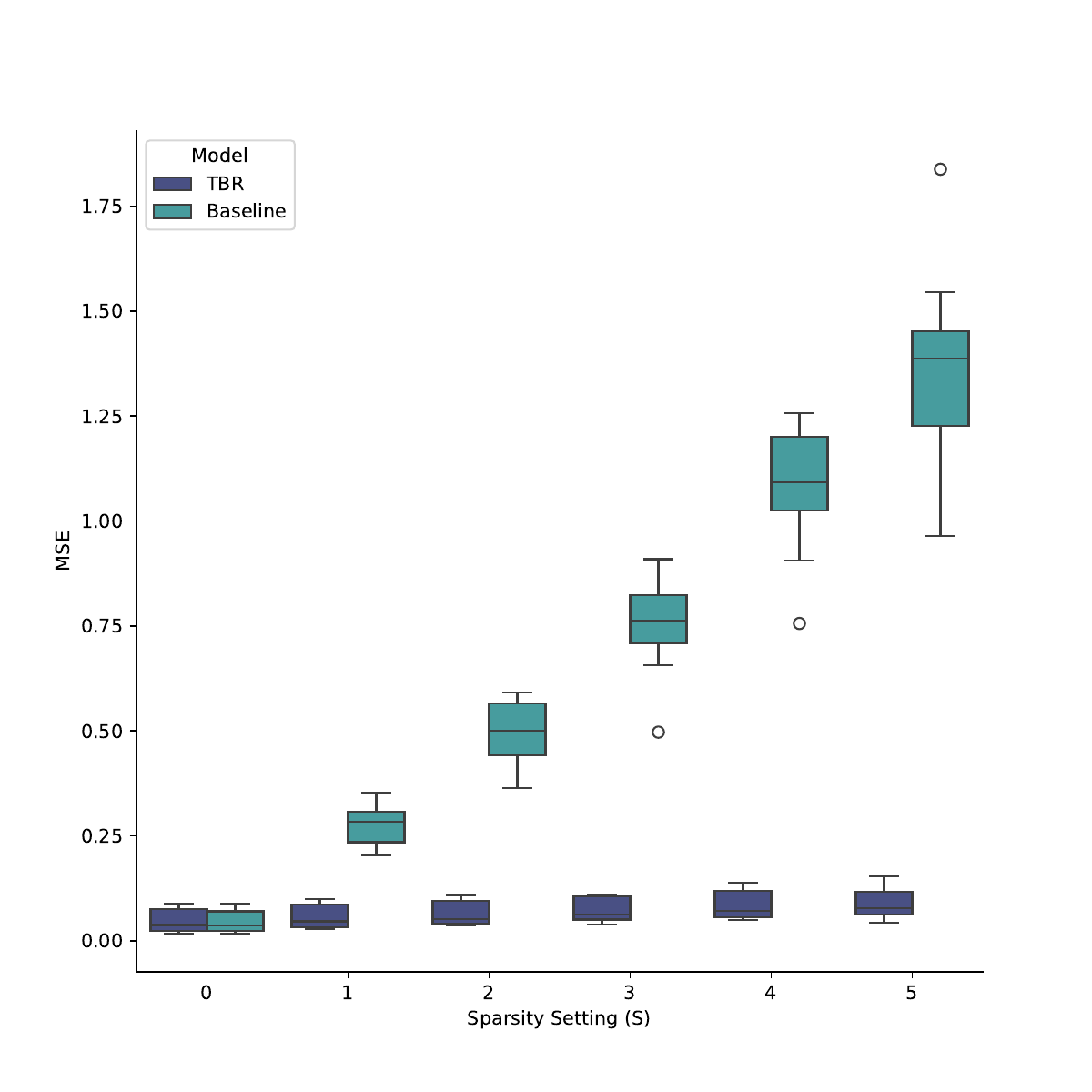}
  \caption{Prediction error (MSE).}
  \label{fig:mse}
\end{subfigure}
\caption{Assessment of model performance across various simulation settings. The x-axis indicates the number of non-zero entries in each $\delta$. \textbf{Left:} A comparison of the MCC between $\textbf{Z}$ and the estimates $\hat{\textbf{Z}}$ produced by TBR and the baseline method. TBR achieves near perfect disentanglement when $S = 1$. \textbf{Right:} Comparison of prediction error produced by  TBR and the baseline method when estimating $Y$.}
\label{fig:sim_results}
\end{figure*}

\parhead{Causal effect estimation.} To show the importance of disentangled representations for downstream causal inference, we estimate the causal effects of each high-level latent variable
on the target $Y$ by using the representation $\hat{\mbZ}$ produced by each method as a substitute for the true latents. Following  \citet{pearl2000models}, with mild assumptions, the causal effect of each latent $\mbZ_k$ is identified by:
\begin{align}
\begin{split}
    \textrm{ATE}_k &= \mathbb{E}_{\mbZ_{-k}}\bigg[\mathbb{E}[Y \vert \mbZ_{-k}, \mbZ_k = z + \delta] 
    - \mathbb{E}[Y \vert \mbZ_{-k}, \mbZ_k = z] \bigg] \nonumber
\end{split}
\end{align}
where ATE is the average treatment effect. Given the linear mapping from $\mbZ$ to $Y$, we estimate the causal effects of $\mbZ$ with ordinary least squares regression, and average the error in estimated effects over $10$ environments in the $S=1$ setting. We find that the disentangled representation obtained by TBR yields highly accurate effects (MSE=$0.05 \pm 0.07$), while the entangled baseline representations are severely biased (MSE=$1.51 \pm 1.8$). Further details are included in Appendix A.6.

\subsection{Single-cell RNA-seq experimentation}
\label{sec:scrna}

In addition to analyzing the behavior of TBR in simulation, we evaluated the ability of our method to learn disentangled representations of ground-truth gene expression data, with simulated downstream cell-state phenotypes. 

In our setup, both $\mbX$ and $\mbZ$ consisted of real gene expression data observed across different cells. We selected genes from within a set of 11 broadly expressed house-keeping genes\citep{eisenberg2013human} to hold out as latent variables. For $\mbX$, we extracted all transcription factors (TFs) annotated as regulators of $\mbZ$, in the hTFtarget database \citep{zhang2020htftarget}. We removed from consideration two weakly regulated house-keeping genes, resulting in a set of nine candidate $\mbZ$ genes (see Appendix A.5), each regulated by $18$ to $95$ TFs within $\mbX$. We performed all experimentation on the publicly available GTEx V8 snRNA-Seq dataset, described by \citet{eraslan2022single}. We consider the $43$ epithelial cell types as the set of environments $\mathcal{E}$.

\parhead{Preprocessing}
All preprocessing of the GTEx data was performed using standard functions within the \texttt{Scanpy} library \citep{wolf2018scanpy}. To denoise the data and adjust for dropout during sequencing, we first applied across the entire dataset the MAGIC function developed in \citet{van2018recovering}. The resulting dataset was highly imbalanced, with a small number of cell types comprising the majority of examples. To prevent these cell types from dominating our analysis, we randomly selected a subset of each cell-type population such that no individual environment had more than $1000$ samples, resulting in a final dataset of $15256$ cells. 

We constructed the tree $\mathcal{T}$ relating the cell types by running the dendrogram method in the \texttt{Scanpy} library, such that the $43$ epithelial cell types made up the leaves in $\mathcal{T}$. We simulated phenotype $Y$ using latent gene expression $\mbZ$ and cell type tree $\mathcal{T}$ via the simulation procedure outlined in \cref{par:simulation}.

\parhead{Results} We repeatedly sampled subsets of $5$ candidate genes to serve as $\mbZ$ and the corresponding TF expression $\mbX$, and compared the performance of TBR and our previously described baseline method across $30$ different simulated phenotypes, for each sparsity level $S$.

The performance of the two methods in disentangling the latent gene expression values are depicted in \cref{fig:sc_mcc}. TBR and the baseline method exhibit equivalent average MCC scores of $0.51 \pm 0.01$ and $0.50 \pm 0.01$ respectively in the $S=0$ setting, when there is no variation between environments. Once variation is introduced, the performance of TBR increases, exhibiting an average MCC of $0.61 \pm 0.02$ with both $S=1$ and $S=2$, while the performance of the baseline method degrades rapidly to $0.31 \pm 0.02$ as $K$ increases to $2$. Globally, MCC scores are lower than observed in simulation. We expect that much of this gap in performance can be attributed to the high levels of endogenous noise within the $\mbX \rightarrow \mbZ$ relationship.

\begin{figure*}[t!]
\centering
    \begin{subfigure}[t]{0.49\linewidth}
  \centering
  \includegraphics[width=0.99\linewidth]{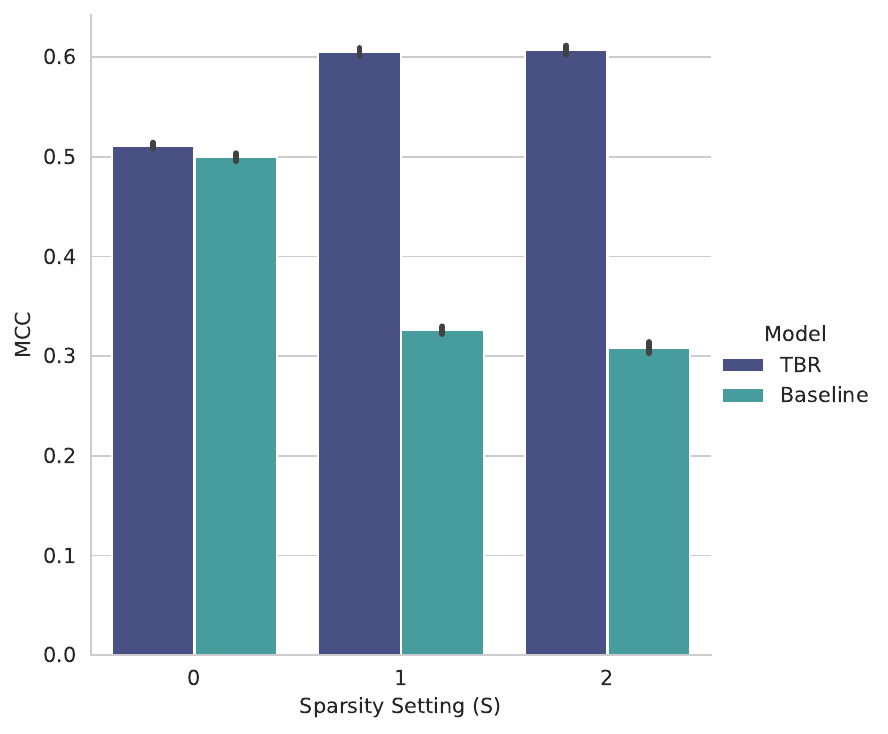}
  \caption{Disentanglement performance (MCC)}
  \label{fig:sc_mcc}
\end{subfigure} 
\begin{subfigure}[t]{0.49\linewidth}
  \centering
  \includegraphics[width=0.99\linewidth]{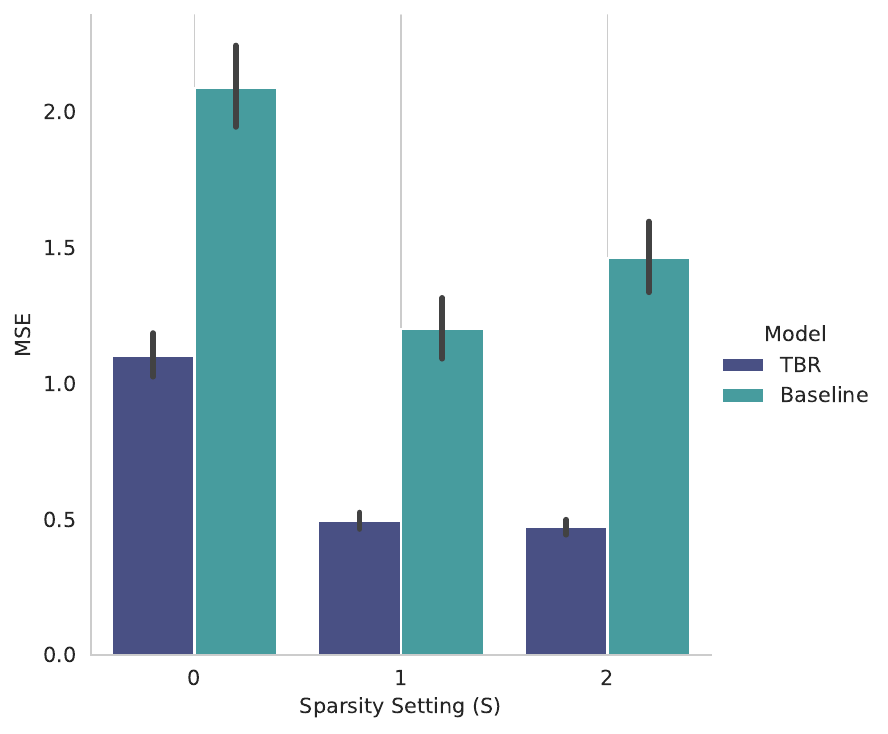}
  \caption{Prediction error (MSE) on unseen test cell types.}
  \label{sc_transfer_mse}
\end{subfigure}
\caption{\textbf{Left panel:} A comparison of the MCC between $\hat{\mbZ}$ and $\mbZ$ across various settings of sparsity in the simulation procedure, when $\mbX$ and $\mbZ$ are both ground-truth gene expression measurements. Results are averaged over $30$ simulated phenotypes for each of $75$ random choices of genes to serve as $\mbZ$. \textbf{Right panel:} MSE exhibited when generalizing trained instances of TBR and the baseline method to unseen cell types with $1$-Sparse generative parameters changes, while varying the $S$ setting of training environments. Results are averaged over $10$ simulated phenotypes each for $50$ random choices of $\mbZ$ and $4$ held-out test cell types. }
\label{fig:sc_results}
\end{figure*}

\paragraph{Generalization to unseen environments.} 
Since parameter values across related environments only sparsely vary when we consider the true latents, but densely vary when we consider incorrect transformations of these latents, we further expect that better disentanglement should lead to lower error when we transfer our learned predictor to an unseen but similar environment.
Thus, to demonstrate the value of learning partially disentangled representations of gene expression, we evaluated the ability of our methods to predict phenotype in environments that were unobserved during training. We selected cell types to use as a test set, and completely excluded them from training.

We compared the performance between our baseline method, trained on the closest neighboring cell-type to the test type, and TBR, using the parameters from the closest environment to the test type in $\mathcal{T}$. To maximize the potential performance from the baseline, we only considered test cell types whose closest neighbor had a large population (pop. size $>=1000$). Additionally, we restricted analysis to test types with a population size $>= 100$.

As in the previous experiment, we repeatedly sampled random choices of genes to make up $\mbZ$ and generated simulated phenotypes with varying settings of $S$. In all cases, the unobserved environments had $w_{test}$ exhibiting 1-sparse changes from the closest observed environment in $\mathcal{T}$, resulting in a non-trivial generalization challenge even when $S=0$ for the training environments. The results averaged across all choices of $\mbZ$ are presented in \cref{sc_transfer_mse}.

Notably, training TBR with $S=0$ resulted in a test MSE of $1.10 \pm 0.58$ in the unseen cell types, a significantly higher error rate than the $S=1$ and $S=2$ settings (respective MSE of $0.49 \pm 0.22$ and $0.47 \pm 0.20$). This improvement in MSE when $S > 0$ is expected, as the $1$-sparse changes in the unseen $w_{test}$ could become more widely distributed across the entangled estimated of $\hat{\mbZ}$. This reduction in generalization risk in the $S >= 1$ setting illustrates the value of even partially disentangled representations. The baseline method exhibits significantly higher MSE in all settings, which may be due to the inherent limitations of training on only closely related cell types, restricting sample size.

\section{Related work}
\label{sec:related_work}

Several papers have shown that with samples from multiple environments, auxiliary labels, and assumptions on the data generating process, we can learn disentangled representations \citep{hyvarinen2016unsupervised,hyvarinen_nonlinear_2017,hyvarinen_nonlinear_2019,ice-beem20,ivae,vonkugelgen2021selfsupervised,shen2021disentangled,klindt2021towards,GreRubMehLocSch19}. Most of these previous works assume the latent variables are causal ancestors of the observed variables, whereas we consider latent variables which are causal descendants of the observations, enabling use of the target distribution as a signal for latent identification, rather than a reconstruction objective.  Within the research on disentanglement, this paper relates most closely to two lines of work that exploit sparsity to guarantee disentangled solutions to the representation learning task.

\parhead{Sparse changes to latents.} One line of work focuses on temporal observations \citep{lippe_citris_2022} or paired samples \citep{locatello2019challenging,ahuja2022properties,vonkugelgen2021selfsupervised} generated by varying only a small number of generative factors. Similarly, \citet{lachapelle2022disentanglement,lachapelle2024nonparametric} leverage sparse interactions between latent factors across time and/or with auxiliary variables to learn disentangled representations.
In this paper, instead of leveraging sparse shifts in the distribution over latent variables, we exploit sparse changes in the mechanism by which the latent variables affect a target variable. 

\parhead{Sparse mappings.} Another line of work exploits sparsity in the mappings from latent to observed variables to establish disentanglement. In the case of generative factors, \citet{moran2022identifiable, brady2023provably, zheng2022on} assume that each latent factor affects a subset of the components of the observation. In the latent features setting that we also study in this paper, \citet{lachapelle2022synergies,fumero2023leveraging} observe multiple targets of interest that depend sparsely on the latent features.
 In contrast, in this paper, we focus on sparse changes in the mapping from latent features to the single target variable of interest. 
 Another key distinction is that we leverage a hierarchical structure over environments. This allows us to enforce that mechanism changes be sparse locally while still admitting changes that are dense between environments that have diverged significantly. Thus we do not need to constrain all pairs of environments to vary sparsely.

 \parhead{Tree based regularization} Our tree-based regularization scheme is similar to the phylogenetic regularization presented in  \citet{dendronet} but here we rely on a sparsity penalty to encourage disentanglement, whereas \citeauthor{dendronet} leveraged distributional assumptions for better generalization.

\section{Conclusion}
\label{sec:conclusion}
The ability to learn disentangled representations is an important prerequisite for the use of representation learning in a number of scientific tasks, such as causal effect estimation. There is a growing attention in the literature to the use of sparse variations between environments to achieve disentanglement. We make several novel contributions towards this line of pursuit. We demonstrate that disentanglement is possible when data originates from a set of evolving environments, such as those resulting from cellular differentiation or species evolution, where parameter shifts along a given tree branch are sparse. Tree-branch sparsity suffices to enable disentanglement, even if the resulting pair-wise differences between observable environments may be dense. 

We present novel theoretical results, leveraging sparse tree-based regularization in parameter space to disentangle representations of learned features, downstream from the observable inputs in the causal graph. We validate the value of our approach through a series of simulation studies, that demonstrate improvements in disentanglement scores, prediction error, and estimation of the causal effects of the latent features. We further investigate the ability of TBR to learn disentangled representations of ground truth gene expression data, with simulated downstream cell phenotypes. While further improvements are required to achieve the full potential of causal representation learning on scRNA-seq data, we observe promising trends. Our findings suggest that there is value in leveraging information about cell-type differentiation with TBR-style methodology, both to assist in disentangling latent molecular mechanisms and to improve generalization performance.

There are multiple directions for future work, including relaxing linearity in the mapping from latents to the target, better characterization of the number of environments required, further exploration into estimation of the number of latents, and expanding our results into more flexible generative processes, such as by relaxing the assumption that $p(\mbZ \mid \mbX)$ remains invariant. 
\FloatBarrier

\bibliographystyle{abbrvnat}

\bibliography{refs}{10}

\appendix

\section{Appendix / supplemental material}

\subsection{Proof of Proposition 4.4}
\begin{proof}
    By Assumption~\ref{ass:dgp}, and the conditions on $\hat{Y}$ outlined in Proposition 4.4, we have
    \begin{align}
        \mathbb{E}[Y|\textbf{X},e] &= \hat w_e^\top \hat \Psi(\textbf{X}) \\
        w_e^\top \Psi(\textbf{X}) &= \hat w_e^\top \hat \Psi(\textbf{X}) \label{eq:master_eq}
    \end{align}
    
    Let $e_1, \dots, e_{k}$ be the environments given by Assumption~\ref{ass:suff_var_w}. We can define matrices $$\textbf{U} = \begin{bmatrix}
    w_{e_{1}}^\top \\
    \vdots \\
    w_{e_{k}}^\top
    \end{bmatrix}\ \text{and}\ \hat{\textbf{U}} = \begin{bmatrix}
    \hat w_{e_{1}}^\top \\
    \vdots \\
    \hat w_{e_{k}}^\top
    \end{bmatrix}.$$ By Assumption~\ref{ass:suff_var_w}, we know $\textbf{U}$ is invertible, which allows us to write
    \begin{align}
            \textbf{U}\Psi(x) & = \hat{\textbf{U}}\hat{\Psi(x)} \\
            \Psi(x) & = \textbf{U}^{-1}\hat{\textbf{U}}\hat\Psi(x) \\
            \Psi(x) & = \textbf{L}\hat\Psi(x) \label{eq:lin_eq}
    \end{align}
    
    where $\textbf{L} := \textbf{U}^{-1}\hat{\textbf{U}}$.

    By Assumption 4, we can create an invertible matrix $\textbf{Q} = [\Psi(x^{1}),...,\Psi(x^{k})]$. We can subsequently denote $\hat{\textbf{Q}} = [\hat{\Psi}(x^{1}),...,\hat{\Psi}(x^{k})]$, allowing us to write $\textbf{Q} = \textbf{L}\hat{\textbf{Q}}$. Since $\textbf{Q}$ is invertible, both $\textbf{L}$ and $\hat{\textbf{Q}}$ must also be invertible.

    Combining~\eqref{eq:master_eq} and \eqref{eq:lin_eq} yields 
    \begin{align}
        w_e^\top \textbf{L}\hat\Psi(x) &= \hat w_e^\top \hat \Psi(x) \\
        w_e^\top \textbf{L}\hat Q &= \hat w_e^\top \hat Q \\
        w_e^\top \textbf{L} &= \hat w_e^\top \,, \label{eq:wL_w}
    \end{align}
    where the last equation holds because $\hat {\textbf{Q}}$ is invertible.

    Choose some $a \in \mathcal{A}$ and assume $a = (e_0, e'_0)$. Since~\eqref{eq:wL_w} holds for all $e$, it holds in particular for $e_0$ and $e'_0$:
    \begin{align}
        w_{e_0}^\top \textbf{L} &= \hat w_{e_0}^\top\\
        w_{e'_0}^\top \textbf{L} &= \hat w_{e'_0}^\top \, .
    \end{align}
By substracting both equations above, we obtain
\begin{align}
        (w_{e'_0} - w_{e_0})^\top  \textbf{L} &= (\hat w_{e'_0} - \hat w_{e_0})^\top \\
        \delta_a^\top \textbf{L} &= \hat\delta_a^\top \,,
    \end{align}

where the last step above follows from the definition of $w_e$ in \eqref{eq:w_e_def}. This concludes the proof.
\end{proof}

\subsection{Proof of Proposition 4.7}

This result builds on arguments from both \citet{factor_sparsity} and \citet{lachapelle2022disentanglement}.

\begin{proof} 

By Lemma~\ref{lem:L_perm}, there exists a permutation $\pi$ such that for each column $i$ in $\textbf{L}$, there is an index $\pi(i)$ where $\textbf{L}_{\pi(i), i}$ is a non-zero value $\alpha$. Thus $\hat\Delta_{:,i}$ is equal to some linear combination of columns in $\Delta$ where column $\Delta_{:,\pi(i)}$ has coefficient $\alpha$. Details are included in Appendix A.

We use operator $S$ to denote the ``sparsity pattern'' of a matrix. The sparsity pattern of $\Delta$ is denoted as $S_{\Delta}$, and is equal to the set of indices of non-zero elements in $\Delta$. $S_{\Delta}^{c}$ indicates the complement of the sparsity pattern: the set of indices corresponding to entries with value zero. Similarly, $S_{\Delta_{:,i}}$ denotes the sparsity pattern of $\Delta$ at column $i$.

Our proof builds upon a series of column-wise comparisons. For each index $i \in \{1, \ldots, k\}$, we compare the $L_0$ norm of column $\Delta_{:,\pi(i)}$ and column $\hat\Delta_{:,i}$. 

We define two sets of indices, $\Omega_{i}$ and $\Gamma_{i}$,
\begin{align}
    \Omega_{i} &:= S_{\Delta_{:,\pi(i)}} \cap S^{c}_{\hat\Delta_{:,i}} \\
    \Gamma_{i} &:= S^{c}_{\Delta_{:,\pi(i)}} \cap S_{\hat\Delta_{:,i}}
\end{align}
Thus, $\Omega_{i}$ represents nonzero entries that were present in $\Delta_{:,\pi(i)}$, but not in $\hat\Delta_{:,i}$ and $\Gamma_{i}$ represents the opposite, i.e. nonzero entries that were not present in $\Delta_{:,\pi(i)}$ but that are present in $\hat\Delta_{:,i}$. Thus, the following equality holds:
\begin{equation}
    ||\hat\Delta_{:,i}||_{0} = ||\Delta_{:,\pi(i)}||_{0} + |\Gamma_{i}| - |\Omega_{i}|  
\end{equation}

We now show that $\Omega_i$ is empty by contradiction. Assume there exists $a \in \Omega_{i}$. This means $\Delta_{a, \pi(i)} \not= 0$ and $\hat\Delta_{a, i} = \Delta_{a, :} L_{:, i} = 0$. By the 1-sparse assumption, $\Delta_{a, -\pi(i)} = 0$. This further implies that $\Delta_{a, :} L_{:, i} = \Delta_{a, \pi(i)} L_{\pi(i), i} = 0$, which is a contradiction. We thus conclude that $\Omega_i$ is empty.

Therefore, for arbitrary column $i$, it will always be the case that $||\hat\Delta_{:,i}||_{0} \geq ||\Delta_{:,\pi(i)}||_{0}$.

We rewrite the original sparsity constraint:
\begin{align}
    ||\hat\Delta||_{0} &\leq ||\Delta||_{0}\\
    \sum_i ||\hat\Delta_{:, i}||_{0} &\leq \sum_i ||\Delta_{:,i}||_{0}\\
    \sum_i ||\hat\Delta_{:, i}||_{0} &\leq \sum_i ||\Delta_{:,\pi(i)}||_{0}\\
    \sum_i (||\hat\Delta_{:, i}||_{0} - ||\Delta_{:,\pi(i)}||_{0}) &\leq 0
\end{align}
The above combined with $||\hat\Delta_{:, i}||_{0} - ||\Delta_{:,\pi(i)}||_{0} \geq 0$ implies that, for all $i$, $||\hat\Delta_{:, i}||_{0} = ||\Delta_{:,\pi(i)}||_{0}$. This means $\Gamma_i$ is empty as well. 

Because $\Gamma_i$ is empty for all $i$, we have that 
\begin{align}
    \forall i, a,\ \Delta_{a,\pi(i)} = 0 \implies \hat \Delta_{a,i} = 0 \,,
\end{align}
or, equivalently,
\begin{align}
    \forall i, a,\ \Delta_{a,i} = 0 \implies \hat \Delta_{a,\pi^{-1}(i)} = 0 \,, \label{eq:consequence_of_empty_gamma}
\end{align}

Choose an arbitrary $i$. By Assumption~\ref{ass:1_sparse}~\&~\ref{ass:suff_perturb}, there exists an $a$ such that $\Delta_{a,\pi(i)} \not= 0$ and $\Delta_{a,-\pi(i)} = 0$. By \eqref{eq:consequence_of_empty_gamma}, we have
\begin{align}
    \Delta_{a,-\pi(i)} = 0 \implies \hat\Delta_{a,\pi^{-1}(-\pi(i))}  &= 0 \\
    \hat\Delta_{a,-i} &= 0\\
    \Delta_{a, :}\textbf{L}_{:, -i} &= 0\\
    \Delta_{a, \pi(i)}\textbf{L}_{\pi(i), -i} &= 0 \,,
\end{align}
which implies $\textbf{L}_{\pi(i), -i} = 0$ (since $\Delta_{a, \pi(i)} \not= 0$). This holds for all $i$, thus $\textbf{L}$ is a permutation-scaling matrix.

\end{proof}

\subsection{Useful Lemmas}
The argument for proving the following Lemma is taken from \citet{lachapelle2022disentanglement}.

\begin{lemma}[Sparsity pattern of an invertible matrix contains a permutation] \label{lem:L_perm}
Let $L \in \mathbb{R}^{m\times m}$ be an invertible matrix. Then, there exists a permutation $\pi$ such that $L_{i, \pi(i)} \not=0$ for all $i$.
\end{lemma}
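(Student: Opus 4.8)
\textbf{Proof proposal for Lemma~\ref{lem:L_perm}.}

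The plan is to deduce the existence of such a permutation from the Leibniz formula for the determinant combined with invertibility. Recall that $\det(L) = \sum_{\sigma \in S_m} \operatorname{sgn}(\sigma) \prod_{i=1}^{m} L_{i, \sigma(i)}$, where the sum ranges over all permutations $\sigma$ of $\{1, \dots, m\}$. Since $L$ is invertible, $\det(L) \neq 0$, so the sum on the right-hand side is nonzero. A sum of terms cannot be nonzero unless at least one of its summands is nonzero; hence there exists some permutation $\pi \in S_m$ for which $\operatorname{sgn}(\pi) \prod_{i=1}^{m} L_{i, \pi(i)} \neq 0$. Because $\operatorname{sgn}(\pi) \in \{-1, +1\}$, this forces $\prod_{i=1}^{m} L_{i, \pi(i)} \neq 0$, and a product of real numbers is nonzero only when every factor is nonzero. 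Therefore $L_{i, \pi(i)} \neq 0$ for all $i$, which is exactly the claim.

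An alternative, more combinatorial route would be to invoke a Hall-type matching argument: define a bipartite graph on row-indices and column-indices with an edge $(i,j)$ whenever $L_{i,j} \neq 0$, and show that the nonzero pattern of an invertible matrix satisfies Hall's condition, so that a perfect matching (the desired permutation) exists. If no such matching existed, one could find a set of rows whose combined support lies in fewer than that many columns, which would make those rows linearly dependent after restriction, contradicting invertibility. I would mention this as a remark but carry out the determinant argument as the main proof, since it is shorter and entirely self-contained.

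The only real subtlety is elementary: justifying that a nonzero sum has a nonzero term, and that a nonzero product has all factors nonzero — both are immediate for real (or complex) scalars and require no further hypotheses. There is genuinely no hard step here; the lemma is a standard fact, and the proof is a two-line consequence of the Leibniz expansion. I would simply state it cleanly so that it can be cited in the proof of Proposition~4.7, where it supplies the permutation $\pi$ aligning the columns of $\textbf{L}$ with nonzero diagonal-type entries.
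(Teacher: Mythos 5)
Your determinant argument is exactly the proof the paper gives: invertibility forces $\det(L) = \sum_{\pi} \operatorname{sign}(\pi)\prod_i L_{i,\pi(i)} \neq 0$, so some summand, and hence every factor $L_{i,\pi(i)}$ in it, is nonzero. The proposal is correct and matches the paper's approach; the Hall-type matching remark is a valid alternative but not needed.
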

\begin{proof}
Since the matrix $L$ is invertible, its determinant is non-zero, i.e.
\begin{align}
    \det(L) := \sum_{\pi\in \mathfrak{S}_m} \text{sign}(\pi) \prod_{i=1}^m L_{i, \pi(i)} \neq 0 \, ,
\end{align}
where $\mathfrak{S}_m$ is the set of $m$-permutations. This equation implies that at least one term of the sum is non-zero, meaning there exists $\pi\in \mathfrak{S}_m$ such that for all $i \in [m]$, $L_{i, \pi(i)} \neq 0$.
\end{proof}

\subsection{Remarks on \cref{sec:identification}}

\subsubsection{On relaxing the 1-sparse assumption:}

Assumption \ref{ass:1_sparse} is particularly stringent, and many problem settings will not adhere to it in practice. We note some of the expected results of relaxing this assumption, as they relate to our theory.

Central to our proof are the two sets $\Gamma_i$ and $\Omega_i$.  $\Omega_i$ is the set of indices where entanglement by $L$ decreases sparsity for column $i$, and $\Gamma_i$ is the set of indices where sparsity is increased. We reformulate the definition of $\Omega_{i}$

\begin{equation}
    \Omega_{i} = \{ j \in S_{\Delta_{:,\pi(i)}} \mid \Delta_{j, -\pi(i)} \cdot L_{-\pi(i),i}  = -\alpha\Delta_{j,\pi(i)} \}
\end{equation}

One can see that the condition for index $j$ to be in $\Omega_{i}$ is equivalent to: $\Delta_{j, \pi(i)} \cdot L_{\pi(i),i} = 0$. Thus, if the number of indices in $\Omega_{i} = a$ for some $ a \geq 2$, it implies that there is a subset of $a$ rows in $\Delta$ where the columns are linearly dependent. Under the assumption that non-zero values of $\Delta$ are generated independently, it seems intuitive that the likelihood of finding $a$ rows with linearly dependent columns decreases rapidly as $a$ increases, and the results in Figure \ref{fig:mcc} give some evidence towards this possibility. We leave for future work the possibility for formalizing a probabilistic bound on the size of $a$. We also note that \citeauthor{lachapelle2022synergies} find an identification result for latent variables that holds in a setting with infinitely many environments but more general assumptions about sparsity, which may have some connections to the probabilistic line of reasoning we outline.

An additional possible direction for future work towards generalizing our result is to investigate the use of "anchor environments", as used by \citeauthor{moran2022identifiable} to disentangle latent generative factors.

\subsection{Additional experimental details}

\paragraph*{Implementation}
We instantiated $\hat{\Psi}$ as a neural network with two hidden layers (256 units, 64 units) and LeakyRelu activations. Performance was compared across two methods of estimating linear map $\hat{\Phi}$. First, a TBR implementation was used to directly estimate $\hat{w}_0$ and $\hat{\Delta}$ in order to predict $\hat{y}$. Regularization of $||\hat{\Delta}||_0$ was approximated by instead regularizing $||\hat{\Delta}||_1$. We compare against a baseline of training a single estimation of $\hat{w}$, without accounting for variation between environments. 

In each experiment, $\hat{\Psi}$ and the relevant estimation of $\hat{\Phi}$ were jointly optimized with the Adam optimizer \cite{adam}. We trained models on 50\% of the data. A validation set of 25\% of the data was used to tune hyper-parameters (learning rate, $\lambda$ for TBR models), and the remaining 25\% was held out as a test set for evaluation. All models were implemented in PyTorch \cite{NEURIPS2019_9015}.

\parhead{Performance metric - mean correlation coefficient}

The MCC between $\mbZ$ and $\hat{\mbZ}$ is:
\begin{equation}
    \underset{\pi \in \mathcal{P}}{\arg\max} \frac{1}{k} \sum_{i=1}^{k} \mid\mid\text{Pearson}((\hat{\mbZ}_{\pi(i)}), \mbZ_i)\mid\mid,
\end{equation}
where $\mathcal{P}$ is the set of possible permutations of $\hat{\mbZ}$ and we calculate the Pearson correlation coefficient. 

\parhead{Hyper-parameters}

Hyper-parameters are summarizes in \cref{tab:hyperparams}. Gene expression experiments were tuned individually per choice of $\mbZ$ and phenotype from the range listed.

\FloatBarrier

\begin{table}[t]
\small
    \centering
        \caption{Summary of hyper-parameter settings.}
        \bigskip
    \begin{tabular}{lll}
    \hline
        \textbf{Experiment / Dataset} & \textbf{Parameter name} & \textbf{Setting or range} \\ \hline
        {Simulated} & $\lambda$ & 0.001 \\ 
        {Simulated} & {Learning rate} & 0.001 \\ 
        {Gene Expression} & $\lambda$ & \{0.0, 0.1, 0.01, 0.001, 0.0001\} \\ 
        {Gene Expression} & {Learning rate} & \{0.001, 0.0001\}\\ 
     \hline
    \end{tabular}

    \label{tab:hyperparams}
\end{table}
\FloatBarrier

\parhead{List of house-keeping genes}

Sets of genes for inclusion in $\mbZ$ described in \cref{sec:scrna} were selected from the list in \cref{tab:genes}:

\begin{table}[t]
\small
    \centering
        \caption{House-keeping genes}
        \bigskip
    \begin{tabular}{l}
    \hline
        \textbf{Gene Name} \\ \hline
        {C1orf43}\\ 
        {CHMP2A} \\ 
        {EMC7} \\ 
        {PSMB2} \\ 
        {PSMB4}\\ 
        {REEP5} \\ 
        {SNRPD3} \\ 
        {VCP} \\ 
        {VPS29} \\ \hline
    \end{tabular}

    \label{tab:genes}
\end{table}

\parhead{Description of error estimates}

All results presented with a mean value accompanied by a $\pm$ error estimate indicate standard deviation, computed with a call to numpy's \texttt{np.std()} command. Box plots, such as \cref{fig:mcc}, include quantiles and outliers calculated by seaborn's \texttt{sns.catplot()} function with default settings. Error bars on bar plots, such as \cref{fig:sc_mcc} indicate the $0.95$CI and are computed by the $\texttt{errorbar}$ function included in \texttt{sns.catplot} with default settings. 

\parhead{Compute requirements}

Experimentation on the fully simulated dataset was performed on a MacBook Pro with $18$GB memory and an M3 Pro CPU. Time to generate a simulated phenotype with $K=1$ and fit an instance of TBR was estimated at $42.98 \pm 13.40$ over $10$ repetitions.

Experimentation on gene-expression data described in \cref{sec:scrna} was performed on a shared compute server with a total of $755$GB of ram and 96 CPU cores. Training leveraged shared use of a Quadro RTX 6000 GPU, with an average GPU memory usage of approximately $1$GB. Run-time to fit an instance of TBR for a random choice of $\mbZ$ was estimated at $25.37 \pm 2.53$ seconds over $10$ repetitions.

\FloatBarrier

\subsection{Additional experiments and results}

\parhead{Simulation with observations exclusively at leaves}

Complementary to the experimentation depicted in \cref{fig:sim_results}, we performed an equivalent set of experiments with the simulation altered such that only leaf nodes had observations, which more closely matches our motivating biological applications. We increased the depth of the tree from $7$ to $8$, keeping the number of observable environments similar. The results display the same trends as discussed in the main text, with an MCC approaching $1.0$ exhibited by TBR in the $K=1$ setting, and a significant increase in entanglement observed by both methods when $K=0$.

\begin{figure*}[t!]
\centering
    \begin{subfigure}[t]{0.49\linewidth}
  \centering
  \includegraphics[width=0.99\linewidth]{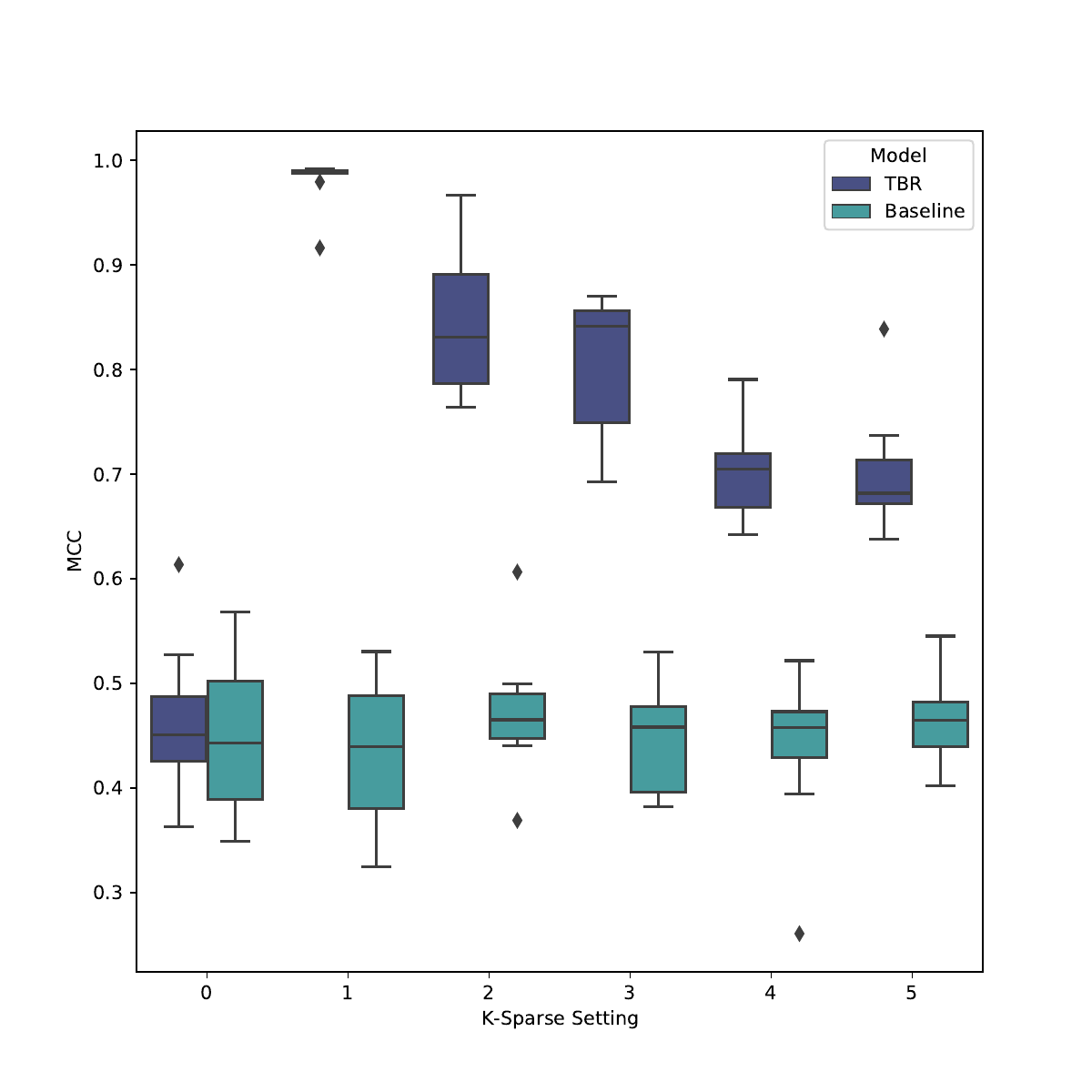}
  \caption{Disentanglement performance (MCC).}
  \label{fig:mcc_leaf}
\end{subfigure} 
\begin{subfigure}[t]{0.49\linewidth}
  \centering
  \includegraphics[width=0.99\linewidth]{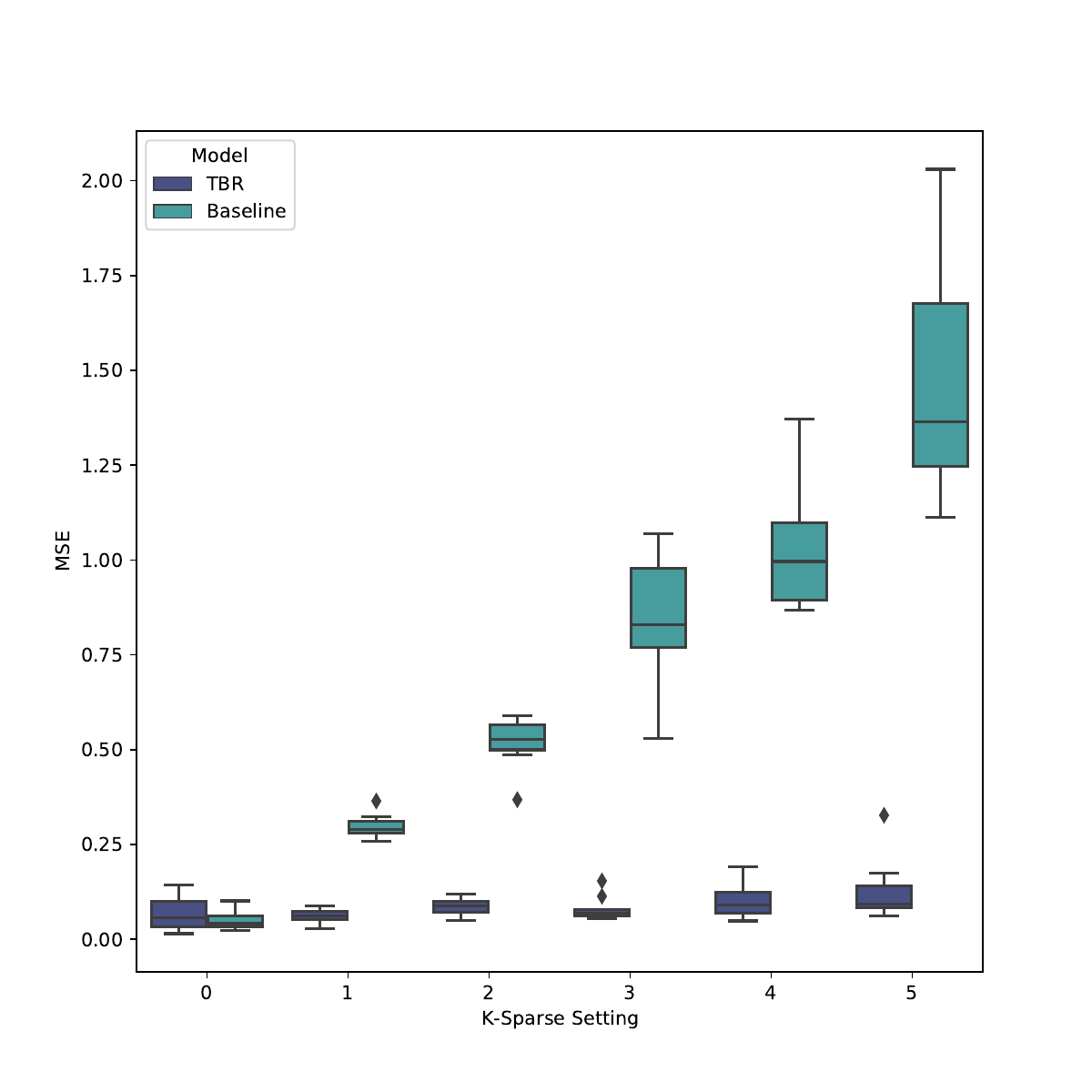}
  \caption{Prediction error (MSE).}
  \label{fig:mse_leaf}
\end{subfigure}
\caption{Assessment of model performance across various simulation settings, with observations available only that the leaf nodes.}
\label{fig:sim_leaf_results}
\end{figure*}

\parhead{Prediction error and estimating latent dimension.} We measured the MSE of $\hat{Y}$ achieved by the baseline and TBR when varying dimensionality of $\hat{\mbZ}$ from 1 to 8, while setting the average sparsity to half the dimensionality by setting each entry in $\Delta$ to zero with probability $0.5$.
 For both methods, and for four different values of the dimension $|\mbZ|$, setting $|\hat{\mbZ}| \leq |\mbZ|$ resulted in elevated MSE. Notably, the MSE achieved by TBR distinctly plateaued after reaching the correct dimensionality, as is shown in Figure \ref{fig:latent_dim}. While not conclusive, these results suggest that there is indeed potential in the analysis of prediction error with TBR as a method of estimating the number of latent features present. This is significant, as an assumption critical to the guarantees we prove in \Cref{sec:identification} is that the dimension of $\hat{\mbZ}$ is equal to the ground truth dimension of $\mbZ$.

\begin{figure}[t]
        \centering
        \includegraphics[width=0.5\linewidth]{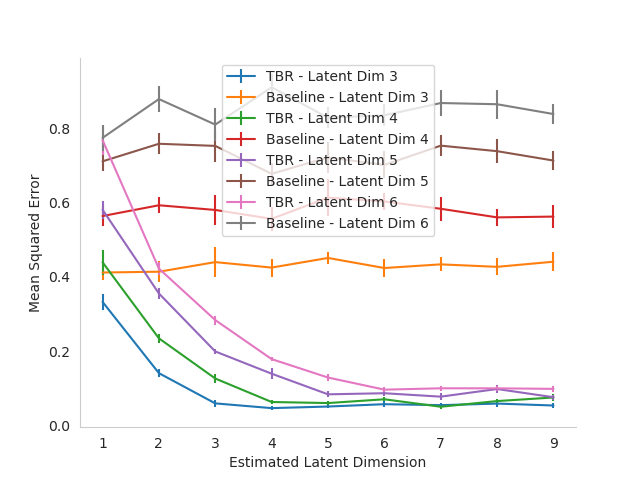}
    \caption{An analysis on the effects of changing the dimension of $\hat{\textbf{Z}} $ on MSE, across various dimensionalities of $\textbf{Z}$. For $|\textbf{Z}| \in \{4, 5, 6\}$, TBR achieved minimal predictive loss when $|\hat{\textbf{Z}}| = |\textbf{Z}| $. In contrast, the baseline method had no cases where the global minimum for MSE occurred when the estimated and true latent vectors were of the same dimension.} 
    \label{fig:latent_dim}
\end{figure}

\parhead{Causal effect estimation in simulation experiments.} We estimated the causal effects of the high-level latent variables $\mbZ$ on the target $Y$ by using the representation $\hat{\mbZ}$ produced by each method as a substitute for the true latents. The goal is to investigate whether disentangled representation learning is necessary for tasks such as causal inference where the semantics of the variables involved matter. Formally, we estimate the average treatment effect (ATE) of each latent $\mbZ_k$ on $Y$,
\begin{align}
    \textrm{ATE}_k &= \mathbb{E}[Y; \textrm{do}(\mbZ_k = z + \delta)] - \mathbb{E}[Y; \textrm{do}(\mbZ_k = z)] \nonumber
\end{align}
which measures the change in the mean of $Y$ when we intervene (indicated by the $\textrm{do}$ operator) and increase the value of $\mbZ_k$ by some constant $\delta$. Since each ATE is not identified from observational data without assumptions, we follow the necessary assumptions that: (i) there are no hidden confounding variables affect both $\mbZ$ and $Y$ and (ii) for all values $\tilde{z}$ of $\mbZ_{-k}, 0 < P(\mbZ_k = z \vert \mbZ_{-k} = \tilde{z}) < 1$.  Following  \citet{pearl2000models}, with these assumptions, and from the graphical model in \Cref{fig:dag}, each causal effect is identified by,
\begin{align}
\begin{split}
    \textrm{ATE}_k &= \mathbb{E}_{\mbZ_{-k}}\bigg[\mathbb{E}[Y \vert \mbZ_{-k}, \mbZ_k = z + \delta] 
    - \mathbb{E}[Y \vert \mbZ_{-k}, \mbZ_k = z] \bigg] \nonumber
\end{split}
\end{align}
As a consequence of the linearity in the mapping from $\mbZ$ to $Y$, each ATE can be estimated by including all variables $\mbZ$ in a linear regression of $Y$ and reading off the fitted linear coefficients.

In our experiments, we estimate $\textrm{ATE}_k$ in ten randomly selected environments in the 1-sparse setting. We fit ordinary least squares regression to $Y$ using the representations $\hat{\mbZ}$ learned using TBR and the baseline methods and compared the estimated effects to those estimated when using the ground truth $\mbZ$.
We normalized the representation values and calculated the mean squared error of the absolute coefficient values, to ignore the effects of scaling.
Table \ref{tab:effect_mse}, included in the appendix, shows the results from this study. We see that the disentangled representation obtained by TBR yields highly accurate effects (MSE=$0.05 \pm 0.07$) in this setting while the baseline method produces representations that severely introduce bias into the effect estimates (MSE=$1.51 \pm 1.8$).

The results from estimating causal effects from the estimated $\hat{\mbZ}$ using either TBR or the baseline method, as described in \cref{sec:experimentation}, are summarized in \cref{tab:effect_mse}.

\begin{table}[t]
\small
    \centering
        \caption{Mean-squared error of effect size estimates calculated using $\hat{\mbZ}$, in comparison to estimates calculated using ground-truth $\mbZ$, computed over 10 trials per environment. TBR achieves a much smaller mean error across all environments, showing that disentangled estimates of $\hat{\mbZ}$ allow for increased accuracy in determining the effect sizes of latent features.}
        \bigskip
    \begin{tabular}{lll}
    \hline
        \textbf{Environment} & \textbf{TBR MSE} & \textbf{Baseline MSE} \\ \hline
        {Env 1} & 0.05 $\pm$ 0.07 & 0.87$\pm$0.76 \\ 
        {Env 2} & 0.05 $\pm$ 0.07 & 1.08$\pm$0.91 \\ 
        {Env 3} & 0.03 $\pm$ 0.04 & 1.29$\pm$1.67 \\ 
        {Env 4} & 0.06 $\pm$ 0.09 & 1.68$\pm$1.72 \\ 
        {Env 5} & 0.05 $\pm$ 0.08 & 1.96$\pm$3.58 \\ 
        {Env 6} & 0.06 $\pm$ 0.09 & 2.55$\pm$2.14 \\ 
        {Env 7} & 0.05 $\pm$ 0.08 & 1.3$\pm$0.94 \\ 
        {Env 8} & 0.05 $\pm$ 0.07 & 1.85$\pm$1.3 \\ 
        {Env 9} & 0.04 $\pm$ 0.06 & 1.19$\pm$1.11 \\ 
        {Env 10} & 0.04 $\pm$ 0.06 & 1.3$\pm$1.29 \\ \hline
        \textbf{Pooled} & 0.05$\pm$0.07 & 1.51$\pm$1.8 \\ \hline
    \end{tabular}

    \label{tab:effect_mse}
\end{table}
\FloatBarrier

\parhead{All-linear data-generative process}

We tested the effect of an all-linear DGP, where $\Psi(X)$ was replaced with a linear map with random weights. We maintained the neural network architecture of $\hat{\Psi}$. We noticed minimal change in the MCC performance of TBR, results are included in Figure \ref{fig:linear_mcc}.

\begin{figure}
        \centering
        \includegraphics[width=0.5\linewidth]{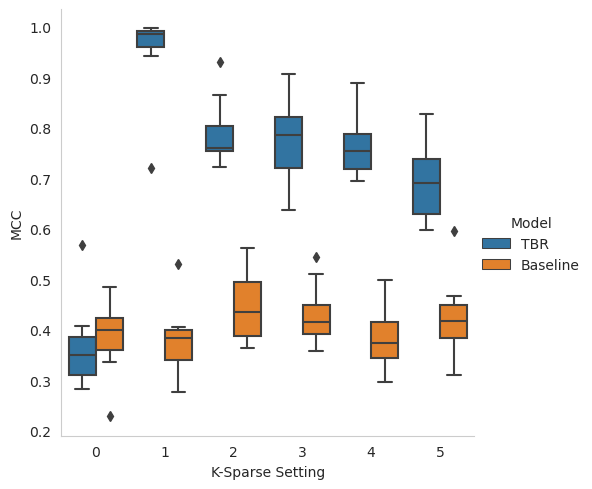}
    \caption{An equivalent plot to Figure \ref{fig:mcc_leaf}, obtained by using a linear map in place of $\Psi(\mbX)$.}
    \label{fig:linear_mcc}
\end{figure}

\parhead{Stochastic Deltas}

Additionally, we tested the effects of randomly selecting the number of non-zero values in each $\delta_{a}$ by drawing from a Bernoulli distribution, while varying the parameter $\pi$ in order to control the expected level of sparsity. Results are displayed in Figure \ref{fig:random_delta}.

\begin{figure}
        \centering
        \includegraphics[width=0.5\linewidth]{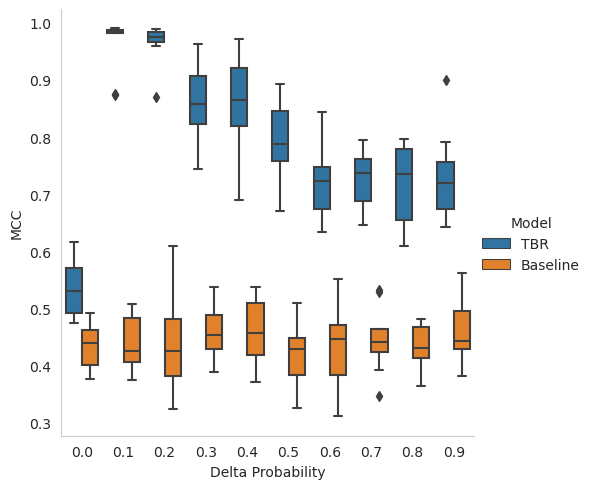}
    \caption{An equivalent plot to Figure \ref{fig:mcc_leaf}, obtained by using a Bernoulli distribution of non-zero entries in $\Delta$ in place of a fixed $k$-sparse setting.}
    \label{fig:random_delta}
\end{figure}

\section*{Software and data}

Relevant code is released \href{https://github.com/BlanchetteLab/TreeBasedRegularization}{here}. Gene expression data can be retrieved via the GTEx portal website.

\end{document}